\documentclass[11pt]{article}

\usepackage[margin=0.82in]{geometry}
\usepackage{amsmath,amssymb,amsthm}
\usepackage{booktabs}
\usepackage{graphicx}
\usepackage{microtype}
\usepackage{xcolor}
\usepackage[hidelinks]{hyperref}
\usepackage{url}
\usepackage{array}
\usepackage{enumitem}
\usepackage{caption}
\usepackage{placeins}
\newtheorem{theorem}{Theorem}
\newtheorem{proposition}{Proposition}
\newcommand{\E}{\mathbb{E}}
\newcommand{\R}{\mathbb{R}}
\newcommand{\tr}{\operatorname{tr}}

\newcommand{\NMSE}{\operatorname{NMSE}}
\newcommand{\method}{MS-SRD}
\hypersetup{
    pdftitle={Training-Free Bottleneck Width Planning for Convolutional Autoencoders},
    pdfauthor={Guannan Guo},
    pdfsubject={Training-free multiscale spectral bottleneck selection}
}

\title{Training-Free Bottleneck Width Planning\\
for Convolutional Autoencoders}
\author{Guannan Guo\\Beihang University}
\date{}

\begin{document}
\maketitle

\begin{abstract}
Multiscale Spectral Rate--Distortion (\method) estimates the bottleneck channels required at user-supplied spatial cuts from training images and a normalized mean-squared error (NMSE) bound, without fitting a neural network. Its covariance-tail rule is exact for shared linear block-convolutional autoencoders under squared error. A nested-scale dominance result motivates reporting the activation--parameter Pareto frontier alongside the minimum-latent candidate. At NMSE $\leq0.01$ on thirteen grayscale datasets, its latent-size prediction has 0.84\% mean absolute percentage error against nonlinear patch-autoencoder boundaries; ten predictions are exact and the remaining three differ by one channel. In a four-dataset deployable comparison, \method\ matches all retrospective external widths and all four selected models pass, without training a selector; a 46-fit validation grid and four Least-Volume fits each pass on two datasets. In a skip-closed U-shaped autoencoder at the same bound, five predictions are exact, nine are within one channel, and every failing prediction is one channel short. Experiments at looser bounds show progressively larger nonlinear savings.
\end{abstract}

\section{Introduction}

Let $X\in\R^{H\times W\times C}$ denote an input image of height $H$, width $W$, and $C$ channels. An encoder--decoder maps $X$ to a latent tensor $Z\in\R^{H_z\times W_z\times C_z}$, whose spatial height, spatial width, and channel count are $H_z$, $W_z$, and $C_z$, respectively, and reconstructs $X$ from $Z$. Choosing these three latent dimensions is consequential: an undersized bottleneck loses reconstructive fidelity, while an oversized bottleneck wastes computation and weakens the intended compression.

Current practice broadly follows three patterns. First, an architecture template may supply an empirical width schedule; the original U-Net, for example, doubles the number of feature channels at each downsampling step \cite{ronneberger2015u}. Second, when bottleneck width is treated as a hyperparameter, repeated training, grid search, or an ablation over candidate widths is used to locate an acceptable reconstruction boundary \cite{boquet2021theoretical,bonheme2022fondue}. Third, adaptive approaches can learn an ordering or variable effective width during neural optimization \cite{rippel2014learning,koike2020rateless,ho2025information}. Each strategy can be appropriate, but all three either inherit a width convention or require trained networks before the dataset-specific bottleneck is known.

The training data already contain strong evidence about the required capacity. Pixel correlations, local smoothness, edges, and multiscale structure determine how quickly the covariance spectrum decays. Principal component analysis (PCA) turns that spectrum into an exact squared-error dimension for an unrestricted linear map. A convolutional bottleneck, however, is not an unrestricted vector code: its spatial grid repeats a local representation at many positions. Global PCA therefore answers a different question and can severely undercount the scalar width of a spatial bottleneck.

This paper asks a complementary question: whether the images themselves and an NMSE bound suffice to predict a convolutional bottleneck tensor before any network is trained. We answer it with a data-dependent geometry that turns a broad width search into an explicit spectral calculation.

\method, summarized in Fig.~\ref{fig:method}, pools non-overlapping image patches at candidate scales and estimates the smallest channel count satisfying the prescribed NMSE bound at each scale. We write $\mathcal Q$ for the finite set of candidate square block side lengths in pixels, $q\in\mathcal Q$ for one candidate length, $\delta$ for the specified NMSE upper bound, and $c_q$ for the channel count selected at scale $q$. The method reports both the minimum-latent candidate and the non-dominated trade-off between latent activations and shared linear-operator parameters. The spatial scales are admissible architectural cuts supplied by the user; estimation uses neither labels nor network gradients.

Our contributions are:
\begin{enumerate}[leftmargin=1.4em,itemsep=2pt]
    \item A training-free rule that predicts the required channel count at each admissible spatial cut and exposes the activation--parameter Pareto frontier rather than returning only a flat intrinsic dimension.
    \item An exact population result for shared linear block-convolutional autoencoders under MSE, a conservative extension to non-divisor block sizes, and a dominance result explaining why latent count alone favors nested coarse cuts.
    \item A validation against nonlinear convolution-equivalent autoencoders on handwritten, clothing, natural, remote-sensing, and medical images, together with deployable grid-search and Least-Volume baselines.
    \item A true-bottleneck U-shaped autoencoder study across four NMSE budgets, with a full-skip U-Net as an architectural control.
\end{enumerate}

\begin{figure}[t]
    \centering
    \includegraphics[width=\linewidth]{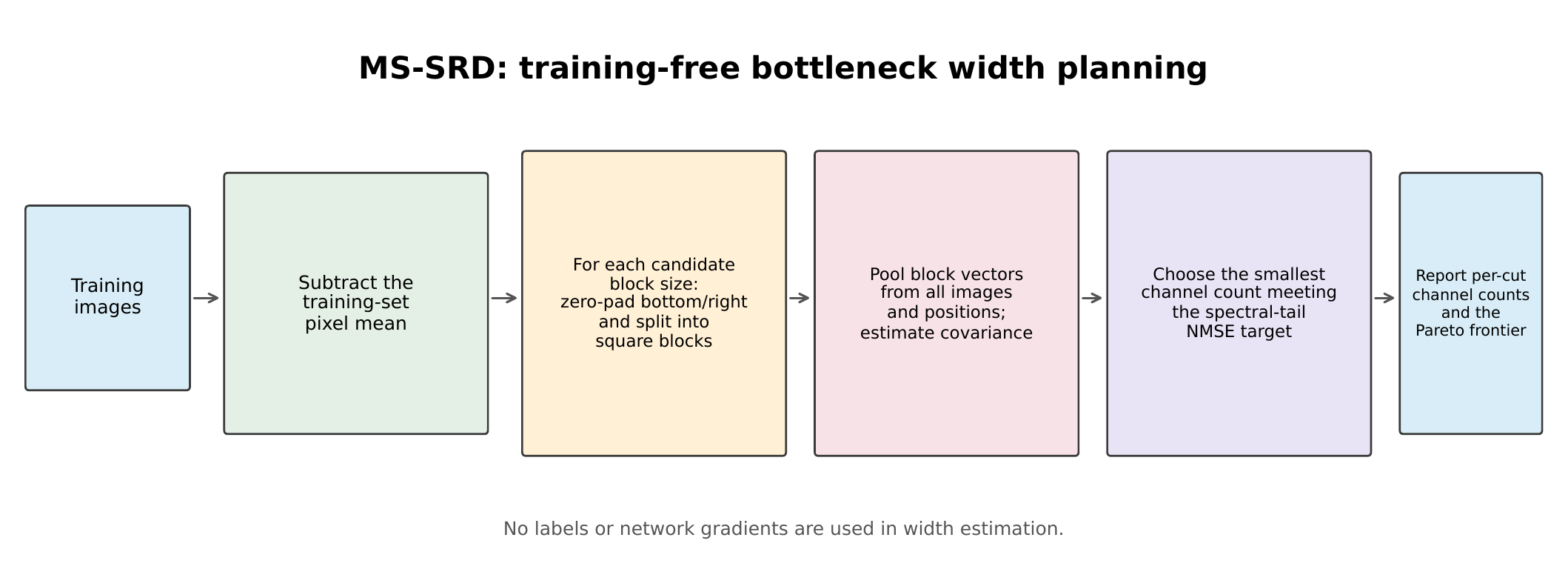}
    \caption{The \method\ pipeline. The mean image is estimated from the training split. ``Pooling patches'' means collecting patch vectors from every training image and spatial block into one covariance sample; it is not a neural pooling layer. Network training is used only for subsequent evaluation.}
    \label{fig:method}
\end{figure}

\section{Related Work}

The starting point for reconstruction under squared error is the classical equivalence between linear autoencoders and principal component analysis (PCA) \cite{baldi1989neural}. The Eckart--Young theorem identifies the best rank-constrained linear approximation \cite{eckart1936approximation}, while Gaussian rate--distortion theory gives a reverse-waterfilling allocation when the representation is quantized and rate is measured in bits \cite{cover2006elements}. These results determine a number of global coordinates, but a convolutional bottleneck has two distinct design variables: how many spatial sites remain and how many channels are stored at each site. A vectorized PCA code does not determine that allocation.

Intrinsic-dimension estimators address a related but different need. Maximum-likelihood nearest-neighbor estimators \cite{levina2004maximum} and the TwoNN estimator \cite{facco2017estimating} infer a scalar manifold dimension from sample distances. Spectral summaries instead soften matrix rank: entropy effective rank exponentiates the entropy of normalized singular values \cite{roy2007effective}, whereas the participation ratio uses the first two moments of the covariance eigenvalues \cite{gao2017theory}. Large-scale studies have applied intrinsic-dimension estimators to image data \cite{pope2021intrinsic}, and data-dependent capacity analyses connect geometry with learnability \cite{yang2022data}. None of these scalar summaries specifies a latent height, width, and channel count under a shared local encoder.

Several autoencoder studies address dimension selection more directly. Tang, Lim, and Siebert train convolutional autoencoders on MNIST and SVHN and compare bottleneck-neuron counts and feature-map counts through reconstruction error, hidden activation statistics, and visualization \cite{tang2018parameterization}. Their recommended parameterization is tied to the evaluated small-image architectures and is obtained only after training the alternatives; it is not conditioned on a prespecified reconstruction bound. Bahadur and Paffenroth add an $L_{2,1}$ penalty to a trained autoencoder and aggregate normalized latent activations into singular-value proxies for intrinsic-dimension estimation \cite{bahadur2020dimension}. This produces a scalar dimension estimate whose value depends on the trained representation and regularization, without allocating that dimension across a convolutional tensor. Least Volume instead penalizes the geometric mean of latent-coordinate standard deviations while constraining the decoder's Lipschitz constant; nearly constant coordinates are pruned after training an initially overcomplete autoencoder \cite{chen2024compressing}. It therefore still requires an initial architecture, neural optimization, a regularization strength, and a numerical pruning rule, and it does not select a spatial bottleneck grid before training.

Zero-cost proxies were developed to reduce the expense of neural architecture search by scoring an initialized candidate from one minibatch or from activation, gradient, or Jacobian statistics, without training the candidate to convergence \cite{abdelfattah2021zero,mellor2021neural}. They primarily provide a relative ranking within a predefined architecture search space rather than an absolute reconstruction-feasible bottleneck width. A candidate network, its parameterization, and usually a sampled minibatch must still be instantiated before a score can be computed. A broad evaluation of 13 proxies on 28 tasks found proxy-specific biases, including sensitivity to cell size, cases in which parameter count or FLOPs were competitive baselines, and limited generalization across benchmarks \cite{krishnakumar2022nas}. These properties make a ranking dependent on the chosen search space and scoring protocol, and the score alone does not specify a latent tensor under an NMSE constraint.

Image models supply a reason to preserve that spatial structure. Classical texture and scattering models exploit local correlations across scale \cite{portilla2000parametric,bruna2013invariant}; convolutional encoders likewise reuse local operators over image positions. This suggests estimating local second-order structure at several block sizes rather than treating each image as an unstructured vector.

Other approaches adapt the bottleneck during or after network training. Nested dropout learns ordered coordinates \cite{rippel2014learning}; rateless and information-ordered autoencoders learn representations usable at multiple widths \cite{koike2020rateless,ho2025information}; and rate--distortion or mutual-information criteria can tune a trained autoencoder \cite{sanchez2014rate,boquet2021theoretical}. FONDUE uses a separately trained diagnostic model \cite{bonheme2022fondue}. These methods reduce manual search in different ways, but they still require neural optimization before the final width is known. The unresolved question is whether the training images alone can provide a defensible channel requirement at a specified convolutional cut before that optimization begins.

\section{Problem Formulation}

For the analysis below, $X$ is a random grayscale image, so $C=1$, and $\E$ denotes expectation over the image distribution. Its population mean image is $\mu=\E[X]$. Given $n$ training images $X_1,\ldots,X_n$, we estimate the sample mean image $\widehat\mu$ at every row $u\in\{1,\ldots,H\}$ and column $v\in\{1,\ldots,W\}$ by
\begin{equation}
    \widehat\mu(u,v)=\frac{1}{n}\sum_{i=1}^{n}X_i(u,v)
\end{equation}
At the population level, let $Y=X-\mu$ denote the centered random image. For the finite training sample, we form $Y_i=X_i-\widehat\mu$. The same sample mean image $\widehat\mu$ is subtracted from validation and test images. Throughout, ``centered'' means this positionwise subtraction; it does not include variance rescaling.

Let $\widehat X$ denote a reconstruction of $X$, and let $\|\cdot\|_F$ denote the Frobenius norm. We evaluate the reconstruction by
\begin{equation}
    \NMSE(\widehat X)
    = \frac{\E\|X-\widehat X\|_F^2}
    {\E\|X-\mu\|_F^2}.
    \label{eq:nmse}
\end{equation}
Let $\delta\in(0,1)$ be a user-specified distortion budget. A reconstruction is admissible when $\NMSE(\widehat X)\leq\delta$.

Let $\mathcal Q$ be a finite set of candidate block side lengths, measured in pixels. Each $q\in\mathcal Q$ denotes one square $q\times q$ block scale and therefore one candidate spatial cut; every candidate satisfies $1\leq q\leq\min(H,W)$, but $q$ need not divide $H$ or $W$. At cut $q$, let $H_{z,q}$ and $W_{z,q}$ denote the latent grid height and width, and let $C_{z,q}$ denote its channel count. We seek the smallest $C_{z,q}$ satisfying the distortion target. The corresponding number of latent activation scalars is
\begin{equation}
    M_q = H_{z,q}W_{z,q}C_{z,q}.
\end{equation}
Let $C_{\rm in}$ denote the number of input channels; $C_{\rm in}=1$ in the grayscale experiments. Let $L_q$ denote the parameter count of a bias-free linear encoder and decoder shared over all $q\times q$ blocks. Then
\begin{equation}
    L_q=2q^2C_{\rm in}C_{z,q}
\end{equation}
because the encoder and decoder each contain $q^2C_{\rm in}C_{z,q}$ weights. We report the non-dominated candidates in $(M_q,L_q)$. If activation storage is the only cost, the minimum-$M_q$ member is a useful summary; an architecture-imposed cut or a parameter budget can select another member of the frontier.

\section{Multiscale Spectral Rate--Distortion}

For each candidate $q\in\mathcal Q$, define the padded image height $\widetilde H_q$, padded image width $\widetilde W_q$, and number of spatial blocks $P_q$ by
\begin{equation}
    \widetilde H_q=q\left\lceil\frac{H}{q}\right\rceil,
    \qquad
    \widetilde W_q=q\left\lceil\frac{W}{q}\right\rceil,
    \qquad
    P_q=H_{z,q}W_{z,q}
        =\left\lceil\frac{H}{q}\right\rceil
        \left\lceil\frac{W}{q}\right\rceil.
\end{equation}
Thus $H_{z,q}=\lceil H/q\rceil$ and $W_{z,q}=\lceil W/q\rceil$. After centering, each image is padded with zeros on its bottom and right edges to size $\widetilde H_q\times\widetilde W_q$. We denote the resulting padded centered random image by $\widetilde Y_q$ and partition it into $P_q$ non-overlapping $q\times q$ blocks.

For training-image index $i\in\{1,\ldots,n\}$ and block-position index $r\in\{1,\ldots,P_q\}$, let $x_{i,r,q}\in\R^{q^2}$ be the vectorized grayscale patch at scale $q$. In this paper, \emph{pooling patches} means treating the $nP_q$ vectors from all images and all block positions as samples for one shared covariance; it does not mean max pooling, average pooling, or a learned aggregation layer. Their empirical patch mean $\overline x_q$ and empirical covariance $\widehat\Sigma_q$ are
\begin{align}
    \overline x_q
        &=\frac{1}{nP_q}\sum_{i=1}^{n}\sum_{r=1}^{P_q}x_{i,r,q},\\
    \widehat\Sigma_q
        &=\frac{1}{nP_q}\sum_{i=1}^{n}\sum_{r=1}^{P_q}
          (x_{i,r,q}-\overline x_q)(x_{i,r,q}-\overline x_q)^\top.
    \label{eq:samplecov}
\end{align}
For the population counterpart, let $x_q\in\R^{q^2}$ be the random patch obtained by drawing an image from the population and a block position uniformly from the $P_q$ positions. Because the population image has been centered positionwise, $\E x_q=0$. Let $\Sigma_q$ be the covariance of $x_q$, let $U_q\in\R^{q^2\times q^2}$ be an orthogonal matrix whose columns are its eigenvectors, and let $\lambda_{q,j}$ be the eigenvalue associated with column $j$ of $U_q$. Ordering the eigenvalues so that $\lambda_{q,1}\geq\cdots\geq\lambda_{q,q^2}\geq0$ gives
\begin{equation}
    \Sigma_q = \E[(x_q-\E x_q)(x_q-\E x_q)^\top]
    = U_q\operatorname{diag}(\lambda_{q,1},\ldots,\lambda_{q,q^2})U_q^\top,
\end{equation}

For $c\in\{0,\ldots,q^2\}$, interpret $c$ as the number of retained patch eigenvectors. The minimum channel count satisfying the patch NMSE bound is
\begin{equation}
    c_q(\delta)=\min\left\{c:\frac{\sum_{j=c+1}^{q^2}\lambda_{q,j}}
    {\sum_{j=1}^{q^2}\lambda_{q,j}}\leq\delta\right\}.
    \label{eq:cq}
\end{equation}
We abbreviate $c_q(\delta)$ as $c_q$ when $\delta$ is fixed. The resulting channel count is $C_{z,q}=c_q$, and the candidate latent tensor and scalar count are
\begin{equation}
    Z_q\in\R^{\lceil H/q\rceil\times\lceil W/q\rceil\times c_q},\qquad
    M_q=P_qc_q.
\end{equation}
For compatibility with a single-number output, define the minimum-latent candidate
\begin{equation}
    q^\dagger=\arg\min_{q\in\mathcal Q}M_q,\qquad
    (H_z,W_z,C_z)=
    (\lceil H/q^\dagger\rceil,\lceil W/q^\dagger\rceil,c_{q^\dagger}).
    \label{eq:select}
\end{equation}
Ties are resolved toward smaller $q$. We use this candidate as the compact scalar-count summary and retain the full activation--parameter frontier for architectural comparison. Pixels that are constant across the training set become zero after centering, and their zero-variance directions cannot increase $c_q$.

\subsection{Exact result for a shared linear block autoencoder}

For a fixed $q\in\mathcal Q$ and integer channel count $c\in\{0,\ldots,q^2\}$, consider an encoder applying one shared matrix $A\in\R^{c\times q^2}$ to every non-overlapping patch and a decoder applying one shared matrix $B\in\R^{q^2\times c}$. This is a convolution and transposed convolution with kernel and stride $q$, no overlap, no bias, and no skip connections.

\begin{theorem}[Population MSE optimum]
Let $\widehat{\widetilde Y}_q$ denote the padded centered image reconstructed from $\widetilde Y_q$ by $A$ and $B$. For a fixed block size $q$ and channel count $c$, the minimum expected reconstruction error over all such shared linear encoder--decoder pairs is
\begin{equation}
    \min_{A,B}\E\|\widetilde Y_q-\widehat{\widetilde Y}_q\|_F^2
    = P_q\sum_{j=c+1}^{q^2}\lambda_{q,j}.
    \label{eq:theorem}
\end{equation}
It is attained by projecting each patch onto the first $c$ eigenvectors of $\Sigma_q$. Consequently, Eq.~\eqref{eq:cq} is the smallest channel count meeting the population padded-domain NMSE target at scale $q$, and Eq.~\eqref{eq:select} is the minimum-latent member of the candidate family. Cropping the reconstruction back to $H\times W$ can only remove nonnegative squared error, while zero padding adds nothing to the target norm. Adding the mean image $\mu$ back after cropping therefore gives a reconstruction on the original image domain that satisfies the same NMSE bound.
\end{theorem}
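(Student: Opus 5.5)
The plan is to reduce the shared block autoencoder to a single rank-constrained linear approximation problem for the pooled patch vector $x_q$, and then apply Eckart--Young.

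\textbf{Step 1: decompose the error by blocks.} Because the blocks are non-overlapping and the decoder acts blockwise, the squared error splits as
\[
\E\|\widetilde Y_q-\widehat{\widetilde Y}_q\|_F^2=\sum_{r=1}^{P_q}\E\|x_r-BAx_r\|_2^2 ,
\]
where $x_r$ is the centered patch at position $r$. Since $x_q$ draws $r$ uniformly, this sum equals $P_q\,\E\|x_q-BAx_q\|_2^2$. The point of this step is that sharing $A$ and $B$ across positions is exactly what turns the $P_q$ separate problems into one problem over the mixture distribution. No stationarity is needed, only linearity of expectation.

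\textbf{Step 2: apply Eckart--Young to the pooled problem.} Write $E(A,B)=\E\|x_q-BAx_q\|^2$. Since $\E x_q=0$, we have
\[
E(A,B)=\tr\bigl((I-BA)\Sigma_q(I-BA)^\top\bigr)=\|(I-BA)\Sigma_q^{1/2}\|_F^2 .
\]
The product $BA\Sigma_q^{1/2}$ has rank at most $c$. By the Eckart--Young theorem, the smallest value of $\|\Sigma_q^{1/2}-N\|_F^2$ over matrices $N$ of rank at most $c$ equals the sum of the squared singular values of $\Sigma_q^{1/2}$ beyond the $c$-th, namely $\sum_{j>c}\lambda_{q,j}$. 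This gives the lower bound. For attainment, take $A=U_{q,c}^\top$ and $B=U_{q,c}$, where $U_{q,c}$ holds the first $c$ eigenvectors. Then $BA\Sigma_q^{1/2}$ is the truncated eigendecomposition and meets the bound, which establishes Eq.~\eqref{eq:theorem}.

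The step I expect to need the most care is justifying that restricting to products $N=BA\Sigma_q^{1/2}$ loses nothing. The choice of $A$ and $B$ above realizes the Eckart--Young optimizer exactly, so the restricted minimum coincides with the unrestricted one. The case $c=0$ is trivial.

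\textbf{Step 3: NMSE and domain issues.} The padded target norm is $\E\|\widetilde Y_q\|_F^2=P_q\tr\Sigma_q=P_q\sum_j\lambda_{q,j}$. Zero padding contributes nothing, so this also equals $\E\|X-\mu\|_F^2$. The padded-domain NMSE is therefore the tail ratio, which is nonincreasing in $c$, so Eq.~\eqref{eq:cq} gives the smallest feasible $c$. Minimizing $M_q=P_qc_q$ over $\mathcal Q$ then gives Eq.~\eqref{eq:select} by definition.

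For the original image domain, cropping removes only the padded entries of the error. The original-domain error is thus at most the padded error, while the denominator is unchanged. Adding $\mu$ back restores $X-\widehat X$ as exactly the cropped centered error, so the NMSE bound $\delta$ still holds.
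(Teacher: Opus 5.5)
Your proposal is correct and follows essentially the same route as the paper. You split the padded error over non-overlapping blocks, pool positions into $P_q\,\E\|x_q-BAx_q\|_2^2=P_q\tr[(I-BA)\Sigma_q(I-BA)^\top]$, and apply Eckart--Young to the rank-$c$ product, with attainment at $BA=U_{q,c}U_{q,c}^\top$. Your factorization $\|(I-BA)\Sigma_q^{1/2}\|_F^2$ makes the paper's terse ``whitening'' step precise without assuming $\Sigma_q$ is invertible, which matters because padded or training-constant pixels can make it singular. Your Step 3 also spells out the padding, cropping, and minimality consequences that the paper dismisses as ``follows directly.''
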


\begin{proof}
Because the blocks do not overlap, image squared error is the sum of patch squared errors. Let $I\in\R^{q^2\times q^2}$ denote the identity matrix and let $\tr$ denote matrix trace. Pooling positions gives
\begin{align}
\E\|\widetilde Y_q-\widehat{\widetilde Y}_q\|_F^2
&=P_q\E\|x_q-BAx_q\|_2^2\\
&=P_q\tr\left[(I-BA)\Sigma_q(I-BA)^\top\right].
\end{align}
The matrix $BA$ has rank at most $c$. Whitening in the eigencoordinates of $\Sigma_q$ reduces the objective to its best rank-$c$ approximation. If $U_{q,c}\in\R^{q^2\times c}$ contains the first $c$ columns of $U_q$, the Eckart--Young theorem gives the tail sum in Eq.~\eqref{eq:theorem}, attained by $BA=U_{q,c}U_{q,c}^\top$ \cite{eckart1936approximation}. Minimality of $c_q$ and then $M_q$ follows directly.
\end{proof}

\begin{proposition}[Nested-scale dominance]
\label{prop:dominance}
Suppose $q_1,q_2\in\mathcal Q$, $q_2=kq_1$ for a positive integer $k$, and both block sizes divide $H$ and $W$. Then
\begin{equation}
    c_{q_2}(\delta)\leq k^2c_{q_1}(\delta)
    \quad\text{and}\quad
    M_{q_2}\leq M_{q_1}.
\end{equation}
\end{proposition}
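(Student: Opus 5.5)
Because both block sizes divide $H$ and $W$, no padding occurs, so $\widetilde Y_{q_1}=\widetilde Y_{q_2}=Y$ and $P_{q_1}=HW/q_1^2$, $P_{q_2}=HW/q_2^2=P_{q_1}/k^2$. The plan is constructive. I will build a shared linear encoder--decoder at the coarse scale $q_2$ out of the optimal one at the fine scale $q_1$, and then use the characterization in the preceding Theorem: $c_{q_2}(\delta)$ is the smallest channel count whose optimal tail ratio is at most $\delta$. Exhibiting any admissible pair with $k^2c_{q_1}$ channels will therefore bound $c_{q_2}$.

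\textbf{Construction.} Let $c=c_{q_1}(\delta)$, and let $A_1=U_{q_1,c}^\top$, $B_1=U_{q_1,c}$ be the optimal pair at scale $q_1$. Each $q_2\times q_2$ block is tiled exactly by $k^2$ sub-blocks of size $q_1\times q_1$. Because $q_2=kq_1$ divides $H$ and $W$, the $q_1$-grid refines the $q_2$-grid. Up to a fixed permutation of patch coordinates, define $A_2=I_{k^2}\otimes A_1\in\R^{k^2c\times q_2^2}$ and $B_2=I_{k^2}\otimes B_1$. Applying $(A_2,B_2)$ blockwise at scale $q_2$ gives exactly the same image reconstruction as applying $(A_1,B_1)$ blockwise at scale $q_1$. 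Its total squared error is therefore $P_{q_1}\sum_{j>c}\lambda_{q_1,j}$.

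\textbf{Normalization.} The denominators agree: $P_{q}\tr\Sigma_q=\E\|Y\|_F^2$ for both $q$, since non-overlapping patches partition the image. Hence the constructed pair has image NMSE at most $\delta$. By the Theorem, the optimal error at scale $q_2$ with $k^2c$ channels, which is $P_{q_2}\sum_{j>k^2c}\lambda_{q_2,j}$, is no larger than the error of the construction. Dividing by the common total variance shows that the tail ratio at $c'=k^2c$ is at most $\delta$. Minimality in Eq.~\eqref{eq:cq} then gives $c_{q_2}\le k^2c_{q_1}$. One edge case needs a check: if $k^2c>q_2^2$ the count is capped, but $c\le q_1^2$ gives $k^2c\le q_2^2$, so no cap arises.

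\textbf{Latent count.} This follows from the channel bound: $M_{q_2}=P_{q_2}c_{q_2}\le (P_{q_1}/k^2)\,k^2c_{q_1}=M_{q_1}$.

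\textbf{Main obstacle.} The delicate step is verifying that the pooled population covariances at the two scales are consistent, so that the error and variance computations transfer. The random patch at scale $q_2$ is a uniform block position of a population image, and averaging over its $k^2$ sub-blocks reproduces the uniform distribution over $q_1$-positions. I will make this precise by working directly with total image errors, $\E\|Y-\widehat Y\|_F^2$, rather than with per-patch covariances. That sidesteps the issue, because both sides equal sums over the same pixel set.
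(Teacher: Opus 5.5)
Your proposal is correct and follows the paper's proof. Both tile each $q_2\times q_2$ block by $k^2$ sub-blocks, apply the optimal $q_1$ encoder--decoder block-diagonally to get a $k^2c_{q_1}$-channel map with the same image-domain NMSE, use optimality at scale $q_2$, and multiply by $P_{q_2}=P_{q_1}/k^2$. Your explicit checks fill in details the paper leaves implicit: the coordinate permutation, the common denominator $P_q\tr\Sigma_q=\E\|Y\|_F^2$, and $k^2c_{q_1}\le q_2^2$.
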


\begin{proof}
A $q_2\times q_2$ block contains $k^2$ non-overlapping $q_1\times q_1$ blocks. Applying the feasible rank-$c_{q_1}$ encoder and decoder independently to those sub-blocks gives a rank at most $k^2c_{q_1}$ map with the same relative image-domain distortion. The optimal $q_2$ map cannot require more channels. Since $P_{q_2}=P_{q_1}/k^2$, multiplying the channel inequality by $P_{q_2}$ proves the scalar-count inequality.
\end{proof}

The proposition explains why the minimum-latent summary favors coarser nested cuts in this linear family. Coarser patches also require larger shared transforms, so the $(M_q,L_q)$ frontier remains nontrivial. The experiments consequently report channel counts and latent scalars at explicit admissible cuts together with their activation--parameter trade-off.

\subsection{Estimator and computational cost}

Given $n$ training images, covariance estimation requires accumulating $nP_q$ patch outer products and eigendecomposing a $q^2\times q^2$ matrix. The leading costs are $O(nP_qq^4)$ time and $O(q^4)$ covariance storage per scale; no $HW\times HW$ covariance need be formed. Bootstrap resampling of images quantifies geometry stability.

\section{Experimental Design}

All experiments use fixed, recorded random seeds. Dataset subsampling is uniform without replacement, after which selected indices are sorted. Spectral covariance calculations use float64 accumulation and symmetric eigendecomposition; network tensors and optimization use float32. The reported GPU runs use an NVIDIA RTX 5060 Laptop GPU. Standard GPU kernels can retain limited implementation-level nondeterminism. Image-level bootstrap replicates resample only the development pool, and no label enters spectral estimation or autoencoder training.

The installable implementation, locked \texttt{uv} environment, dataset downloaders, command-line interface, and Python API are available in the accompanying public repository \cite{mssrdsoftware}. From a fresh checkout, \texttt{uv sync} installs the complete environment and \texttt{uv run mssrd reproduce-paper} runs the experiment. Exact seeds, restart behavior, custom-dataset instructions, and the artifact layout are documented in the repository README and machine-readable result files. The remainder of this section specifies the data transformations, candidate set, optimization, model selection, and evaluation boundaries needed to interpret the reported results.

\subsection{Datasets and preprocessing}

We use thirteen public datasets. The $28$--$32$ pixel group contains MNIST \cite{lecun1998gradient}, Kuzushiji-MNIST \cite{clanuwat2018deep}, Fashion-MNIST \cite{xiao2017fashion}, CIFAR-10 and CIFAR-100 \cite{krizhevsky2009learning}, and six MedMNIST v2 datasets \cite{yang2023medmnist}. To test whether the conclusion survives more varied contemporary imagery, we add Oxford-IIIT Pet \cite{parkhi2012cats} and EuroSAT \cite{helber2019eurosat}, both evaluated at $64\times64$. Oxford-IIIT Pet includes substantial variation in breed, pose, scale, and illumination; EuroSAT contributes multispectral satellite scenes, from which we use the released RGB rendering. Labels are not used by the spectral analysis or any autoencoder.

Table~\ref{tab:data} distinguishes the development pool from the external comparison pool. The development pool supplies the training-set mean, covariance estimate, and network-optimization images. The patch-autoencoder experiment uses the external pool retrospectively to construct its empirical comparison boundary, including checkpoint comparison, so that boundary is not presented as an unbiased deployment estimate. The U-shaped experiment instead reserves an internal validation subset from the development pool and evaluates the external pool only after checkpoint selection. We cap these pools at 20,000 and 5,000 images, respectively, and use all available images below those limits. For MedMNIST, the released training and validation images form the development pool and the released test set forms the external pool. Oxford-IIIT Pet uses its official trainval/test split. Because EuroSAT has no official split for this task, a fixed seeded permutation creates an 80/20 split before applying the caps.

Oxford-IIIT Pet images are center-cropped and resized to $64\times64$ with Pillow's antialiased Lanczos resampler \cite{pillow}; EuroSAT is already $64\times64$. For an RGB pixel with red, green, and blue intensities $R$, $G$, and $B$, respectively, we compute the BT.601 grayscale intensity \cite{itu2011bt601} as $I_{\rm gray}=0.299R+0.587G+0.114B$. All intensities are then scaled to $[0,1]$. The per-pixel training mean $\widehat\mu(u,v)$ is estimated only from the development pool and subtracted from every split. Per-pixel variance scaling is omitted because it would reweight the raw-pixel MSE objective in Eq.~\eqref{eq:nmse}. Training-set-constant pixels consequently contribute neither covariance nor selected modes.

\begin{table}[t]
\centering
\caption{Dataset pools after resizing and grayscale conversion. ``Development'' supplies the mean image, covariance estimate, and optimization images. ``External'' supplies the retrospective patch-model comparison boundary. For the U-shaped experiment, an internal validation subset is removed from Development and External is evaluated only after checkpoint selection. Counts are the images actually used after deterministic caps, not necessarily the full released split sizes.}
\label{tab:data}
\small
\begin{tabular}{lrrr@{\hspace{1.2em}}lrrr}
\toprule
Dataset & Shape & Development & External & Dataset & Shape & Development & External\\
\midrule
MNIST & $28^2$ & 20,000 & 5,000 & CIFAR-100 & $32^2$ & 20,000 & 5,000\\
KMNIST & $28^2$ & 20,000 & 5,000 & ChestMNIST & $28^2$ & 20,000 & 5,000\\
Fashion-MNIST & $28^2$ & 20,000 & 5,000 & PneumoniaMNIST & $28^2$ & 5,232 & 624\\
CIFAR-10 & $32^2$ & 20,000 & 5,000 & BreastMNIST & $28^2$ & 624 & 156\\
Oxford-IIIT Pet & $64^2$ & 3,680 & 3,669 & OrganAMNIST & $28^2$ & 20,000 & 5,000\\
EuroSAT & $64^2$ & 20,000 & 5,000 & RetinaMNIST & $28^2$ & 1,200 & 400\\
BloodMNIST & $28^2$ & 13,671 & 3,421 & & & & \\
\bottomrule
\end{tabular}
\end{table}

\subsection{Distortion budgets, candidate scales, and baselines}

All primary patch-model comparisons use $\delta=0.01$. The U-shaped experiment uses the same bound for its main comparison and additionally evaluates $\delta\in\{0.02,0.05,0.10\}$ to measure how the relation changes as the permissible reconstruction error widens.

For every image size, $\mathcal Q=\{2,3,4,5,6,7,8\}$. Non-divisor scales use the zero-padding and cropping rule defined above. The upper limit corresponds to the nominal receptive cell of the three stride-two encoder stages used in Sec.~\ref{sec:trueunet}; allowing still larger blocks would move the comparison toward a global image code rather than the local bottleneck geometry under study. Covariances use every extracted development-pool patch.

For every dataset, we also compute PCA on vectorized centered images at the same distortion budget. Let $d$ denote the retained global PCA dimension. This baseline maps an entire $H\times W$ image to $d$ unrestricted coefficients and is MSE-optimal among rank-$d$ linear reconstructions. A convolutional bottleneck instead stores $C_z$ coefficients at each of $H_zW_z$ locations, for $H_zW_zC_z$ scalars, while reusing the same local encoder across locations. Global PCA ignores this locality and weight-sharing constraint, so it can compare scalar counts but cannot decide how those scalars should be divided among the three tensor axes.

\subsection{Resolution and aspect-ratio stress test}

The thirteen-dataset benchmark tests channel prediction at the spatial cuts used by the validation architectures. It does not, by itself, distinguish the scale axis because every minimum-latent candidate reaches the largest allowed block. We therefore perform a separate controlled experiment on the Oxford-IIIT Pet training split. The same images are center-cropped and resized to $48\times64$, $64\times64$, and $64\times96$, and spectra are estimated at $q\in\{4,8,12,16\}$. This manipulation holds the source images and sample count fixed while changing resolution and aspect ratio. For every geometry, we report $M_q$, the shared linear parameter count $L_q$, and Pareto membership. These three renderings are not counted as independent datasets, and the experiment is not used in the thirteen-dataset prediction metric.

\subsection{Nonlinear validation model}

At each tested pair $(q,c)$ of block scale and channel count, let $x\in\R^{q^2}$ denote one vectorized input patch, let $z\in\R^c$ denote its latent code, and let $\widehat x\in\R^{q^2}$ denote the reconstructed patch. We use the Gaussian error linear unit $\operatorname{GELU}$ of Hendrycks and Gimpel \cite{hendrycks2016gaussian}. Let $h=\min(128,\max(32,2q^2))$ be the residual-branch hidden width. The linear maps are
$A_0:\R^{q^2}\to\R^c$,
$A_1:\R^{q^2}\to\R^h$,
$A_2:\R^h\to\R^c$,
$B_0:\R^c\to\R^{q^2}$,
$B_1:\R^c\to\R^h$, and
$B_2:\R^h\to\R^{q^2}$.
The shared patch autoencoder is
\begin{align}
z &= A_0x + A_2\,\operatorname{GELU}(A_1x),\\
\widehat x &= B_0z + B_2\,\operatorname{GELU}(B_1z).
\end{align}
The maps $A_0$ and $B_0$ form the linear encoder and decoder, $(A_1,A_2)$ form the encoder residual branch, and $(B_1,B_2)$ form the decoder residual branch. The maps $A_0$ and $B_0$ are initialized with the leading patch eigenvectors. The output maps $A_2$ and $B_2$ are initialized to zero, so optimization begins at the exact PCA solution. Sharing the network over non-overlapping patches makes it equivalent to a one-layer strided convolutional encoder and transposed-convolutional decoder augmented by pointwise patch MLPs. There are no skip connections and no access to labels.

For each scale, we test channel counts on a coarse grid containing powers of two, the spectral prediction and its two nearest neighbors on each side, then fill every integer between the highest failing coarse point and the first passing point. The empirical oracle is the smallest \emph{tested} latent scalar count attaining held-out NMSE at most 0.01 over all scales. It is an evaluation reference, not a deployable selection procedure, because constructing it uses held-out reconstruction outcomes.

Training uses AdamW \cite{loshchilov2019decoupled} with learning rate $2\times10^{-3}$, weight decay $10^{-6}$, batch size at most 8,192 patches, at most 250,000 sampled training patches per scale, and 160 update steps. We retain the lowest external-pool NMSE observed at 40-step intervals. A fixed random seed defines the main run. Two additional fixed seeds repeat three points for every dataset: the predicted bottleneck, the empirical boundary, and one channel below the boundary. Implementation uses Python 3.12 and PyTorch 2.11 with CUDA \cite{paszke2019pytorch}.

\subsection{Deployable width-selection baselines}

We compare width selectors on MNIST, CIFAR-10, Oxford-IIIT Pet, and EuroSAT, spanning handwriting, natural images, and remote sensing at $28$, $32$, and $64$ pixels. The spatial cut is fixed to the main experiment's minimum-latent cut ($q=7$ for MNIST and $q=8$ otherwise), so this comparison isolates channel selection. Up to 1,000 images are reserved from the development pool for validation; the mean image, covariance, and network weights use only the remaining images. The external pool is untouched until a width and checkpoint have been selected.

Four rows are reported. \method\ selects $c_q$ without neural training and then fits the nonlinear patch autoencoder once at that width for external evaluation. Block PCA uses the same spectral width and the analytic linear encoder--decoder, so it tests whether neural residual branches are needed rather than acting as an independent width rule. Validation grid search trains the coarse-to-fine candidate set from the nonlinear experiment and selects the smallest validation-passing width. Finally, an adaptation of Least Volume \cite{chen2024compressing} starts with all $q^2$ patch coordinates. If $z_j$ denotes latent coordinate $j$ and $\operatorname{std}(z_j)$ its standard deviation over training patches, the adaptation adds
\begin{equation}
    10^{-3}\exp\left(\frac{1}{q^2}\sum_{j=1}^{q^2}\log(\operatorname{std}(z_j)+1)\right)
\end{equation}
to reconstruction loss, and spectrally normalizes every decoder linear map. Following Chen and Fuge, the adaptation retains the geometric-mean standard-deviation penalty and decoder Lipschitz control. The fixed patch architecture, regularization weight, 800-update budget, and validation-based prefix selection belong to our common comparison protocol rather than their architecture-specific experiments. Coordinates are ordered by training-set standard deviation; the smallest validation-passing prefix is retained, with pruned coordinates fixed at their training mean. The table reports selector training runs, selected width, and external NMSE; the retrospective external boundary remains a comparison target, not an input to any deployable selector.

\subsection{True-bottleneck U-shaped autoencoder}
\label{sec:trueunet}

The principal architecture test uses a U-shaped convolutional autoencoder derived from U-Net \cite{ronneberger2015u}. Its encoder has three stride-two stages with widths $(16,32,64,96)$; the decoder mirrors those stages. All lateral encoder--decoder skips are closed. Consequently, every sample-dependent path crosses one terminal tensor at resolution $\lceil H/8\rceil\times\lceil W/8\rceil$, which matches the selected \method\ grid for all eleven datasets in this architecture experiment. Residual blocks follow the residual-learning construction \cite{he2016deep} and use two $3\times3$ convolutions, Group Normalization \cite{wu2018group}, and GELU activations.

At channel count $c$, the terminal tensor is initialized with the first $c$ patch eigenvectors. The resulting block-PCA reconstruction is an exact input--output path through the terminal tensor. A deep encoder residual is added before the cut and a deep decoder residual after it; their terminal projections are initialized to zero. Training therefore starts at the \method\ linear solution but can learn nonlinear and cross-patch corrections without opening a bypass.

We reserve up to 1,000 images from the development pool for checkpoint selection and train on the remainder for at most 800 AdamW steps, with learning rate $10^{-3}$, weight decay $10^{-6}$, batch size 256, and gradient-norm clipping at 5. For each $\delta$, integer channel counts are searched by bisection under the nested-width monotonicity assumption. The smallest validation-passing width is the deployable selection. The external split is evaluated only after checkpoint selection. For retrospective analysis, we also locate the adjacent external-split crossing; this test oracle measures the architecture's realized boundary but is not a deployable estimator.

\subsection{Full-skip architectural control}

The control restores U-Net's lateral skips and compares a zeroed terminal activation with a one-channel terminal tensor. Zeroing occurs after the bias-free bottleneck projection, so the terminal route carries no sample-dependent value. These candidates use 800 AdamW steps with learning rate $2\times10^{-3}$, weight decay $10^{-6}$, and batch size 256.

For a structural audit, let $S$ be the number of non-batch activation scalars transmitted through all skips, and let $M$ be the predicted terminal latent scalar count. Here
\begin{equation}
S=16HW+32\lceil H/2\rceil\lceil W/2\rceil
  +64\lceil H/4\rceil\lceil W/4\rceil.
\end{equation}
The ratio $S/M$ measures raw activation traffic around the terminal tensor. It is deliberately not interpreted as statistical rank, entropy, or compressed bit rate. The zero-terminal condition, rather than this count alone, tests whether those paths suffice for reconstruction.

\subsection{Uncertainty and metrics}

Twenty nonparametric image-level bootstrap replicates \cite{efron1979bootstrap}, each of size $\min(n,5000)$, re-estimate the full spectral selection. This resampling is especially relevant when the spectral tail lies close to $\delta$: Weyl's eigenvalue perturbation inequality \cite{horn2012matrix} bounds each sample-eigenvalue error by the covariance perturbation norm, but even a small perturbation can move the discrete choice $c_q$ by one. We therefore report the complete bootstrap selection counts rather than only an interval for the selected latent count $M_{q^\dagger}$.

Let $M_{\mathrm{pred}}$ denote the \method-predicted latent scalar count and $M_{\mathrm{emp}}$ the empirical boundary from the nonlinear patch model. Prediction error is measured as $|M_{\mathrm{pred}}-M_{\mathrm{emp}}|/M_{\mathrm{emp}}$. We report mean and median absolute percentage error (MAPE), exact matches, multiplicative tolerances, and Pearson correlation between their logarithms.

\section{Results}

\subsection{Predicted versus empirical bottlenecks on compact images}

Table~\ref{tab:main} contains the compact-image results. The minimum-latent summary uses $q=7$ for the $28\times28$ images and $q=8$ for the $32\times32$ images, yielding $4\times4$ spatial sites. This repeated selection of the largest allowed block follows the nested-scale dominance result; the table evaluates the channel and latent-count prediction at those stated cuts.

Across these eleven compact benchmarks, \method\ achieves 1.00\% MAPE and 0.00\% median absolute percentage error. Eight scalar counts are exact; KMNIST, PneumoniaMNIST, and BreastMNIST differ by one channel, and all eleven predictions are within 5\% of the empirical boundary. The largest error is 4.35\%, and the log-scale Pearson correlation is 0.998. Figure~\ref{fig:prediction} includes these results together with the larger-image extension below.

\begin{table}[t]
\centering
\caption{Training-free predictions and empirical nonlinear boundaries at $\delta=0.01$. $d_{\rm PCA}$ is the global PCA dimension at the same NMSE bound; $M_{\rm pred}$ and $M_{\rm emp}$ are the predicted and empirical latent scalar counts. Error is their absolute percentage difference.}
\label{tab:main}
\scriptsize
\setlength{\tabcolsep}{3.6pt}
\begin{tabular}{lrrrrrrrr}
\toprule
Dataset & $d_{\rm PCA}$ & \multicolumn{3}{c}{\method\ prediction} & \multicolumn{3}{c}{Empirical oracle} & Error\\
\cmidrule(lr){3-5}\cmidrule(lr){6-8}
& & Tensor & $M_{\rm pred}$ & & Tensor & $M_{\rm emp}$ & NMSE & \\
\midrule
MNIST & 330 & $4\times4\times38$ & 608 && $4\times4\times38$ & 608 & .0089 & 0.0\%\\
KMNIST & 496 & $4\times4\times37$ & 592 && $4\times4\times38$ & 608 & .0087 & 2.6\%\\
Fashion-MNIST & 452 & $4\times4\times37$ & 592 && $4\times4\times37$ & 592 & .0092 & 0.0\%\\
CIFAR-10 & 431 & $4\times4\times32$ & 512 && $4\times4\times32$ & 512 & .0092 & 0.0\%\\
CIFAR-100 & 408 & $4\times4\times30$ & 480 && $4\times4\times30$ & 480 & .0097 & 0.0\%\\
ChestMNIST & 151 & $4\times4\times17$ & 272 && $4\times4\times17$ & 272 & .0091 & 0.0\%\\
PneumoniaMNIST & 244 & $4\times4\times24$ & 384 && $4\times4\times25$ & 400 & .0092 & 4.0\%\\
BreastMNIST & 191 & $4\times4\times22$ & 352 && $4\times4\times23$ & 368 & .0091 & 4.3\%\\
OrganAMNIST & 679 & $4\times4\times45$ & 720 && $4\times4\times45$ & 720 & .0094 & 0.0\%\\
RetinaMNIST & 53 & $4\times4\times18$ & 288 && $4\times4\times18$ & 288 & .0093 & 0.0\%\\
BloodMNIST & 315 & $4\times4\times28$ & 448 && $4\times4\times28$ & 448 & .0096 & 0.0\%\\
\bottomrule
\end{tabular}
\end{table}

\begin{figure}[t]
    \centering
    \includegraphics[width=0.93\linewidth]{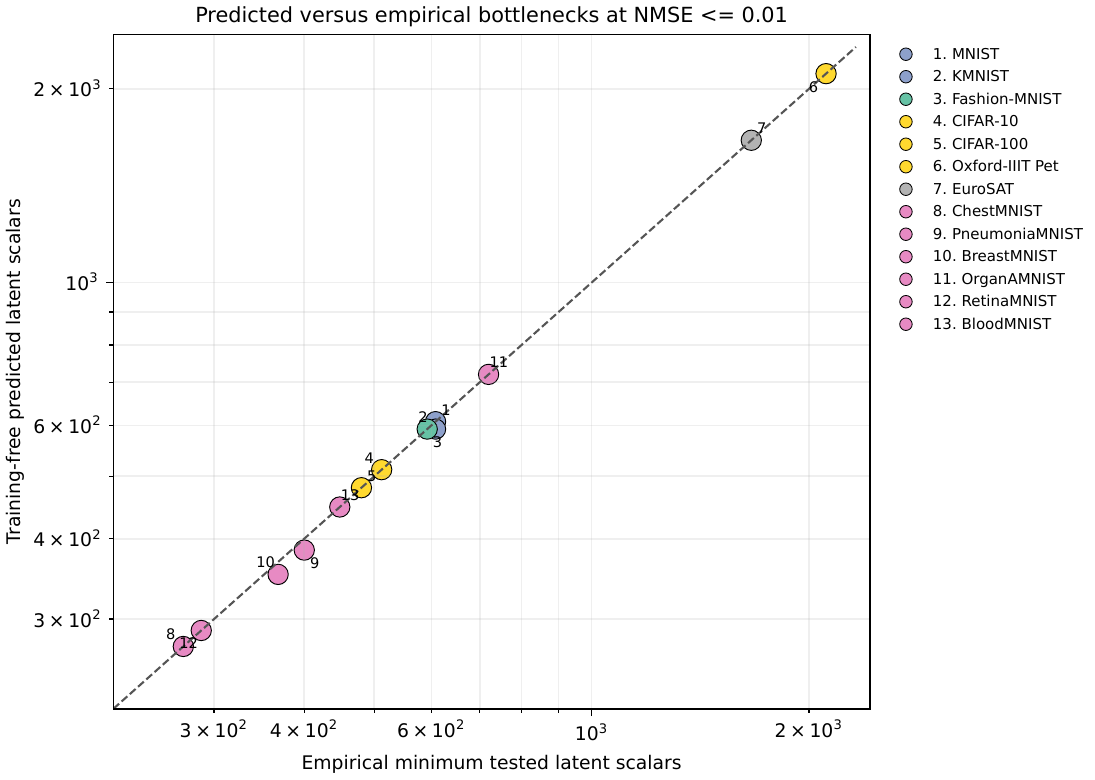}
    \caption{Training-free predicted latent scalar counts versus the empirical minimum tested counts. The dashed line is equality. Numbers identify datasets in the legend.}
    \label{fig:prediction}
\end{figure}

Global PCA has 33.63\% MAPE and underestimates the spatial bottleneck in all thirteen datasets (Fig.~\ref{fig:comparison}). This is expected: a global linear code pays for each retained direction once, whereas a convolutional tensor instantiates $C_z$ channels at every spatial site.

\begin{figure}[t]
    \centering
    \includegraphics[width=\linewidth]{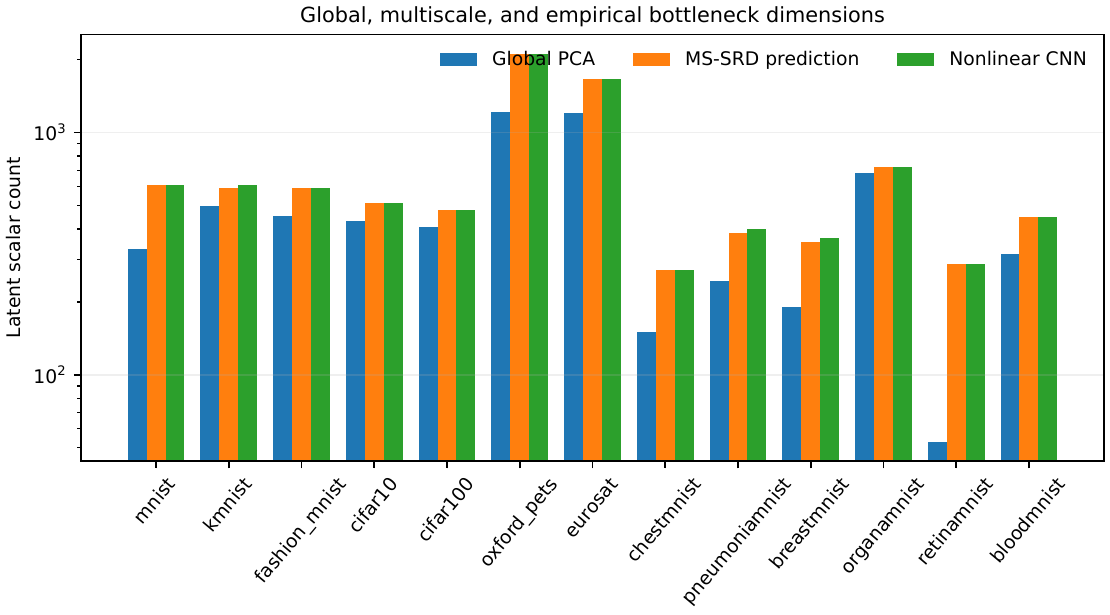}
    \caption{Global PCA, \method, and empirical nonlinear bottleneck sizes. Global PCA uses unrestricted whole-image coefficients; \method\ and the empirical model count coefficients repeated over a spatial bottleneck grid.}
    \label{fig:comparison}
\end{figure}

\subsection{The spatial-cut trade-off}

Figure~\ref{fig:geometry} reports the controlled Oxford-IIIT Pet experiment. Across all three image geometries, $q=4$, $8$, and $16$ are non-dominated, whereas $q=12$ is dominated. Because $q=12$ does not divide at least one image side in any rendering, this status comes from the direct zero-padded estimate, not Proposition~\ref{prop:dominance}. Resolution and aspect ratio change both spatial repetition and the required channel count; the figure shows the resulting exchange between latent activations and shared transform parameters.

These results empirically separate the two cost axes. If only latent activations are charged, $q=16$ wins in every rendering, exactly as the dominance argument predicts. Once shared transform size is included, several scales lie on the Pareto frontier. \method\ records this trade-off as channel requirements, latent activations, and shared transform parameters across the admissible cuts.

\begin{figure}[t]
    \centering
    \includegraphics[width=\linewidth]{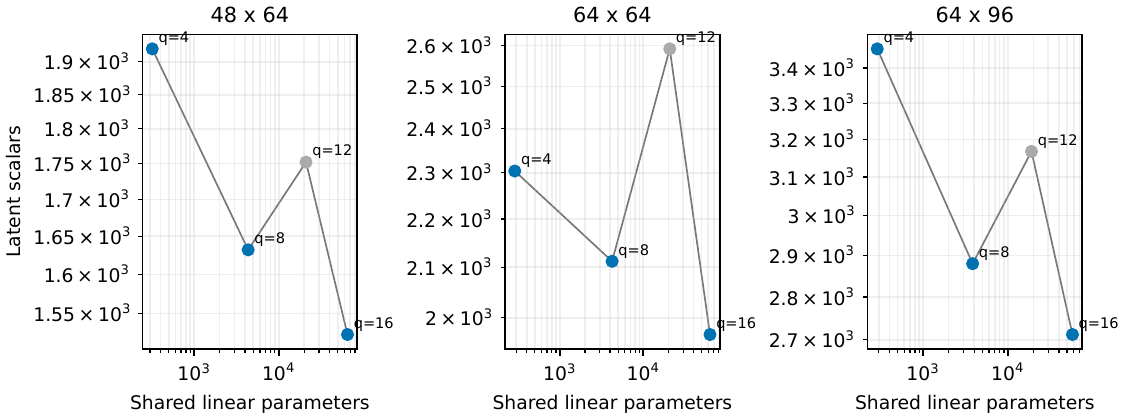}
    \caption{Activation--parameter trade-off for the same Oxford-IIIT Pet training images rendered at three geometries. Blue points are Pareto-optimal; gray points are dominated. Larger blocks reduce repeated latent activations but require larger shared linear transforms.}
    \label{fig:geometry}
\end{figure}

\subsection{Extension to larger, more varied images}

We extend the complete patch-autoencoder protocol to Oxford-IIIT Pet and EuroSAT at $64\times64$. These experiments use the same grayscale conversion, centering rule, distortion bound, candidate scales, optimizer, and channel-search procedure as the compact suite; no dataset-specific calibration is introduced.

Table~\ref{tab:modern} and Fig.~\ref{fig:moderncurves} show the result. \method\ selects $q=8$ for both datasets and exactly matches the empirical boundary: 33 channels for Oxford-IIIT Pet and 26 for EuroSAT. Their full-image PCA dimensions are 1,216 and 1,205, respectively. Bootstrap selections remain confined to adjacent channels at the same scale. Oxford-IIIT Pet selects 33 channels in 12 of 20 resamples and 32 in eight; EuroSAT selects 26 in 14 resamples and 25 in six.

Combining the compact and larger-image experiments gives 0.84\% MAPE across thirteen datasets. Ten predictions are exact, the remaining three differ by one channel, all thirteen are within 5\%, and the Pearson correlation between log latent counts is 0.9996.

\begin{table}[t]
\centering
\caption{Training-free predictions and nonlinear patch-autoencoder boundaries on the $64\times64$ extension at $\delta=0.01$. The bootstrap column reports selection counts over 20 image-level resamples.}
\label{tab:modern}
\small
\setlength{\tabcolsep}{5.2pt}
\begin{tabular}{lrrrrrr}
\toprule
Dataset & Predicted tensor & $M_{\rm pred}$ & Empirical tensor & $M_{\rm emp}$ & NMSE & Bootstrap\\
\midrule
Oxford-IIIT Pet & $8\times8\times33$ & 2,112 & $8\times8\times33$ & 2,112 & .0096 & $c=33$: 12; $c=32$: 8\\
EuroSAT & $8\times8\times26$ & 1,664 & $8\times8\times26$ & 1,664 & .0096 & $c=26$: 14; $c=25$: 6\\
\bottomrule
\end{tabular}
\end{table}

\begin{figure}[t]
    \centering
    \includegraphics[width=\linewidth]{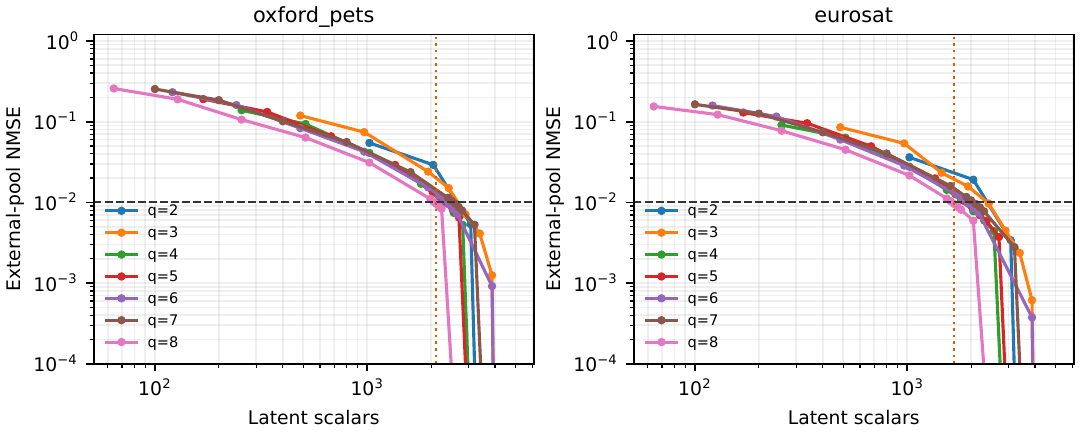}
    \caption{External-pool NMSE for the two $64\times64$ datasets. The dashed horizontal line is $\delta=0.01$ and the dotted vertical line is the training-free predicted scalar count. Both curves cross at the predicted width.}
    \label{fig:moderncurves}
\end{figure}

\subsection{Deployable selector comparison}

Table~\ref{tab:baselines} compares the four selectors under the internal-validation protocol. \method\ matches all four retrospective external widths and all four selected nonlinear models pass. Block PCA shares those widths by construction but narrowly misses the MNIST bound. Validation grid search selects one fewer channel on CIFAR-10 and Oxford-IIIT Pet; both choices pass internally and fail externally. Least Volume makes the same two underselections and adds one channel on MNIST. The stricter bound exposes small validation--external shifts that are hidden at larger reconstruction errors.

In this fixed-cut experiment, the covariance-tail width provides a stronger starting point than the narrowest validation-passing candidate: it matches all four external boundaries while the latter underselects on two datasets.

\begin{table}[t]
\centering
\caption{Deployable fixed-cut width selection on MNIST, CIFAR-10, Oxford-IIIT Pet, and EuroSAT. The channel vector follows that order. ``Fits'' counts neural models used to select width; the one nonlinear fit used only to evaluate the \method-selected width is not charged as selection. MAPE compares channels with the retrospective external oracle.}
\label{tab:baselines}
\small
\setlength{\tabcolsep}{4.5pt}
\begin{tabular}{lrrrrr}
\toprule
Selector & Selected channels & Fits & MAPE & External passes & Mean external NMSE\\
\midrule
\method & 38, 32, 33, 26 & 0 & 0.00\% & 4/4 & .00933\\
Block PCA & 38, 32, 33, 26 & 0 & 0.00\% & 3/4 & .00959\\
Validation grid & 38, 31, 32, 26 & 46 & 1.54\% & 2/4 & .00971\\
Least Volume & 39, 31, 32, 26 & 4 & 2.20\% & 2/4 & .00970\\
\bottomrule
\end{tabular}
\end{table}

\subsection{Spectral and distortion behavior}

Figure~\ref{fig:spectra} shows representative patch spectra. Larger patches expose a longer but rapidly decaying spectrum. The prediction balances this longer channel axis against fewer spatial repetitions. Natural and medical images differ markedly in spectral tail shape, yet the same selection rule applies without dataset-specific fitting.

\begin{figure}[t]
    \centering
    \includegraphics[width=0.96\linewidth]{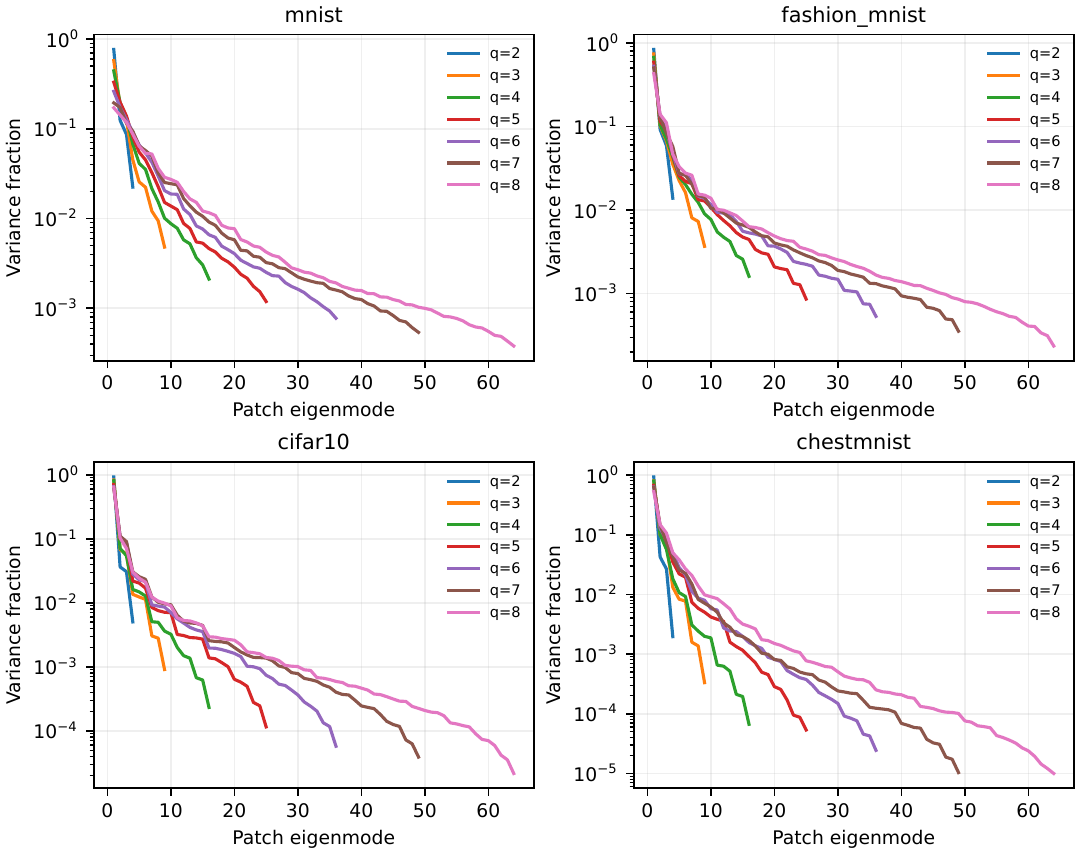}
    \caption{Patch covariance eigenvalues, normalized by total patch variance, at each candidate scale for four representative datasets.}
    \label{fig:spectra}
\end{figure}

The nonlinear distortion curves in Fig.~\ref{fig:curves} are monotone near $\delta=0.01$ and cross the bound close to the vertical spectral prediction. PCA initialization begins at the theorem's linear solution; at this strict bound the residual nonlinear branches leave little room to reduce the required local rank, which is consistent with the close agreement in Table~\ref{tab:main}.

\begin{figure}[t]
    \centering
    \includegraphics[width=0.96\linewidth]{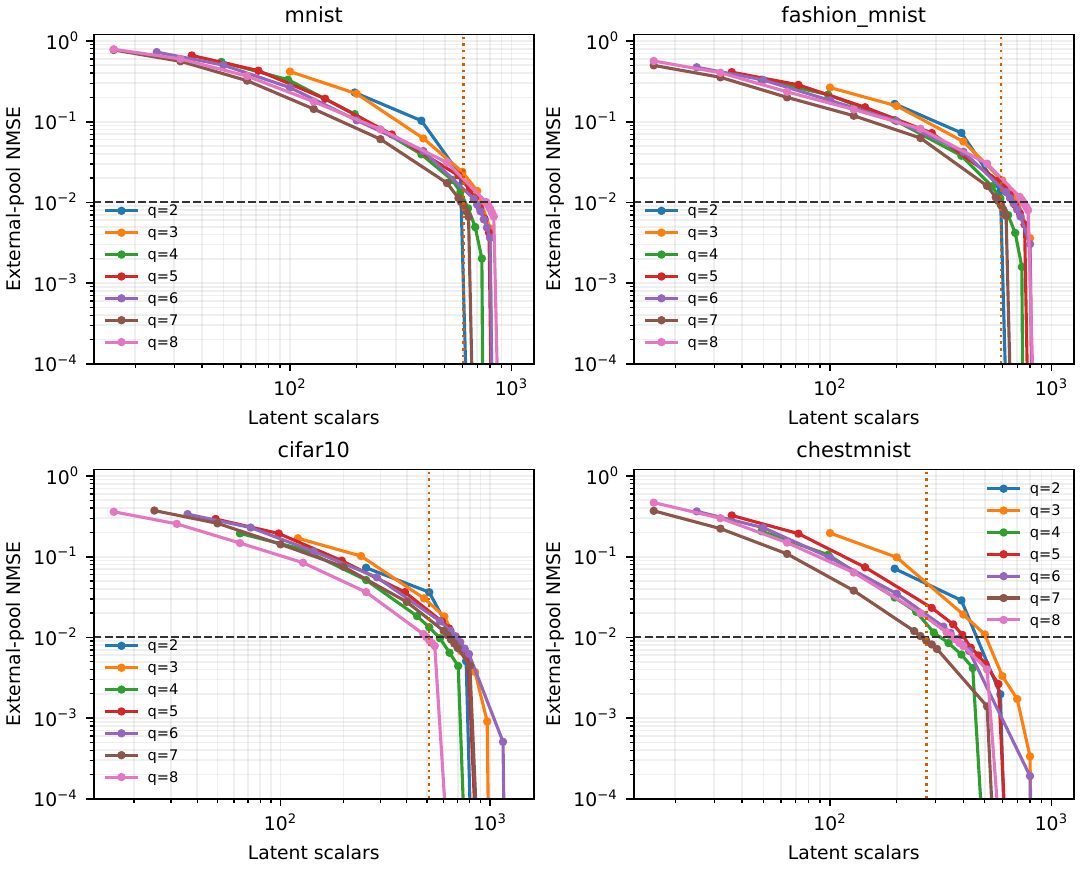}
    \caption{External-pool NMSE against latent scalar count for representative datasets and scales. The horizontal line is $\delta=0.01$; the vertical dotted line is the \method\ prediction.}
    \label{fig:curves}
\end{figure}

\subsection{Boundary and sampling stability}

Across two additional training seeds, all 26 empirical-boundary runs pass. The one-channel-smaller width fails in 25 of 26 runs; the sole exception is KMNIST at NMSE 0.009986, immediately below the bound. Mean NMSE is 0.00928 at the boundary and 0.01043 one channel below. The \method\ prediction passes in 21 of 26 runs. Its five failures comprise one KMNIST seed and both seeds for PneumoniaMNIST and BreastMNIST; the worst NMSE is 0.01023. These are the same adjacent-channel ambiguities seen in the main run rather than scale-selection errors.

\begin{figure}[t]
    \centering
    \includegraphics[width=\linewidth]{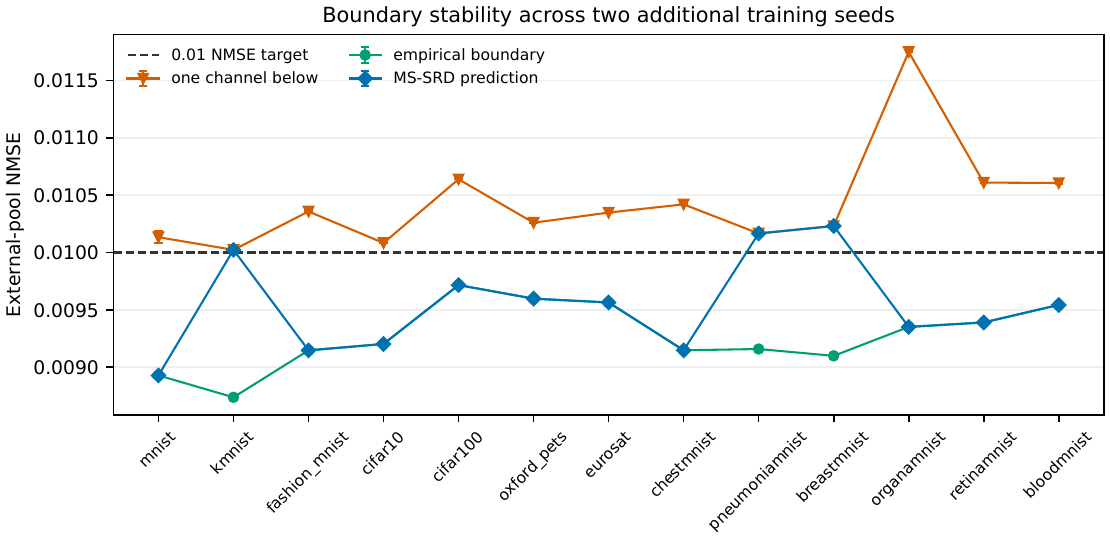}
    \caption{Mean and range across two additional seeds at $\delta=0.01$. Every empirical-boundary run passes; one of 26 one-channel-smaller runs also falls just below the bound.}
    \label{fig:robustness}
\end{figure}

The bootstrap selects the same $(q,c,M)$ in all 20 replicates for seven datasets: KMNIST, Fashion-MNIST, CIFAR-100, ChestMNIST, PneumoniaMNIST, OrganAMNIST, and BloodMNIST. The other six datasets split only between adjacent channel counts at the same scale.

\FloatBarrier
\subsection{The skip-closed U-shaped bottleneck}

Figure~\ref{fig:truecurves} shows the test distortion curves for the architecture in Section~\ref{sec:trueunet}. Every marked retrospective boundary is an adjacent integer crossing: the displayed channel count passes and the next smaller count fails. Independent optimization produces three small non-monotone steps across all sampled curves; the largest increase is 0.00163 NMSE and none changes a reported crossing. For each distortion budget $\delta$, write $c_{\rm MS}(\delta)$ for the \method-predicted channel count and $c_{\rm test}(\delta)$ for this retrospective test boundary.

\begin{table}[b]
\centering
\caption{Relation between \method\ channel counts and protocol-specific test boundaries for the skip-closed U-shaped autoencoder. ``Pass'' counts datasets whose unmodified \method\ width meets the test bound. ``Val. MAE'' compares the validation-selected and retrospective test widths.}
\label{tab:trueunet}
\small
\setlength{\tabcolsep}{5.5pt}
\begin{tabular}{lrrrrrr}
\toprule
$\delta$ & Median $c_{\rm test}/c_{\rm MS}$ & MS MAE & Exact & Within $\pm1$ & Pass & Val. MAE\\
\midrule
0.01 & 1.000 & 1.36 & 5/11 & 9/11 & 7/11 & 0.64\\
0.02 & 0.967 & 1.91 & 5/11 & 7/11 & 11/11 & 0.45\\
0.05 & 0.692 & 4.55 & 1/11 & 2/11 & 10/11 & 0.55\\
0.10 & 0.429 & 4.36 & 2/11 & 2/11 & 11/11 & 0.45\\
\bottomrule
\end{tabular}
\end{table}

At the primary bound $\delta=0.01$, the median empirical-to-predicted ratio is one, the mean absolute width error is 1.36 channels, five predictions are exact, and nine lie within one channel (Table~\ref{tab:trueunet} and Fig.~\ref{fig:relation}). Four predictions fail at the unmodified width, and each corresponding boundary is exactly one channel wider. The two larger errors are conservative overestimates on ChestMNIST and RetinaMNIST.

At $\delta=0.10$, the nonlinear model's median boundary is 42.9\% of the spectral width; the median rises through 0.692 and 0.967 to 1.000 as $\delta$ decreases. Deep nonlinear branches exploit additional structure when appreciable error is allowed, whereas a strict reconstruction bound carries more local directions through the terminal cut. The four budgets therefore trace a systematic transition from nonlinear savings toward agreement with the spectral width.

\begin{figure}[t]
    \centering
    \includegraphics[width=\linewidth]{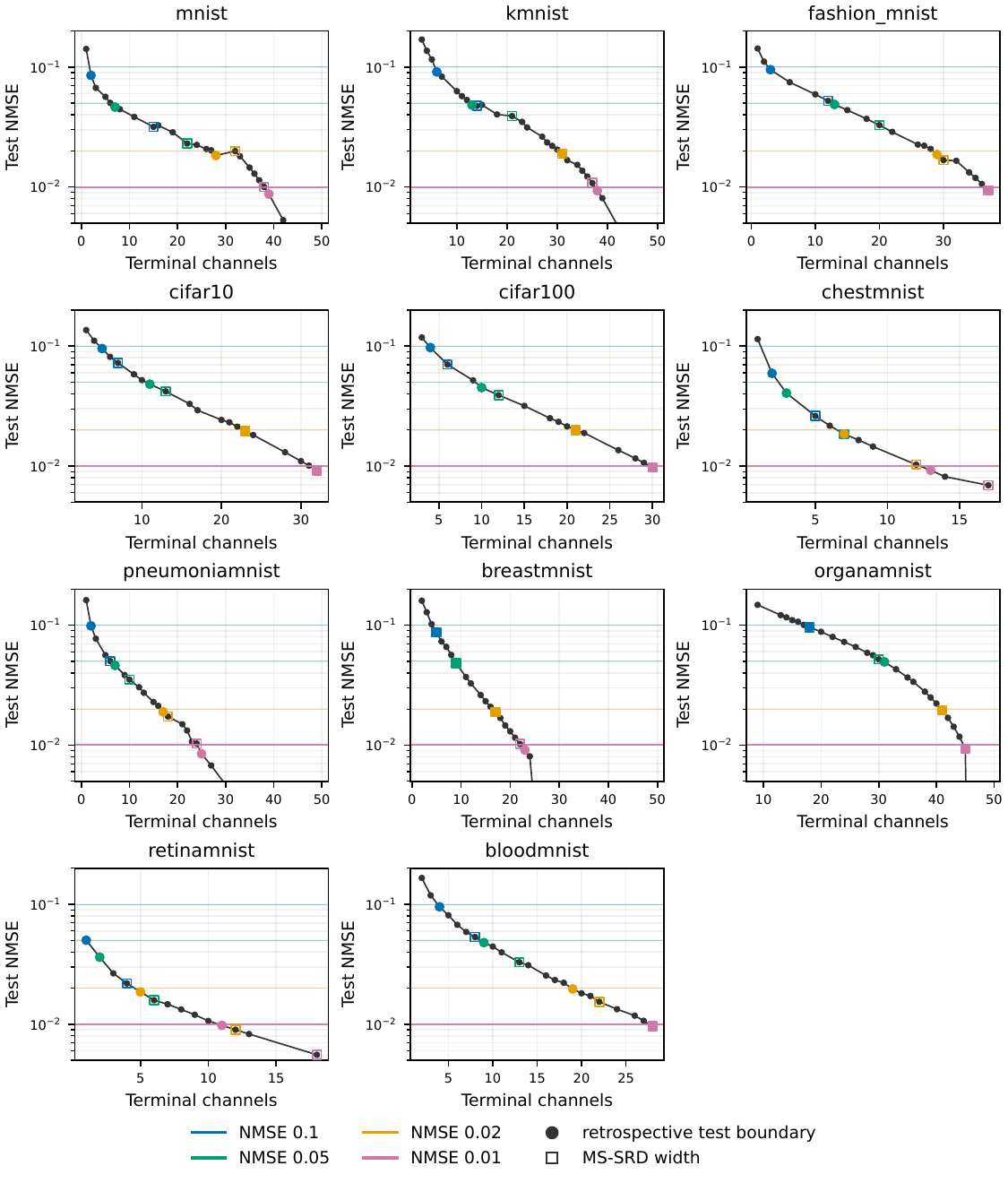}
    \caption{Test NMSE versus terminal channels for the skip-closed U-shaped autoencoder. Filled circles are retrospective adjacent-channel crossings; open squares are \method\ widths. The test boundary is an analysis oracle, not a deployable selector.}
    \label{fig:truecurves}
\end{figure}

\begin{figure}[t]
    \centering
    \includegraphics[width=\linewidth]{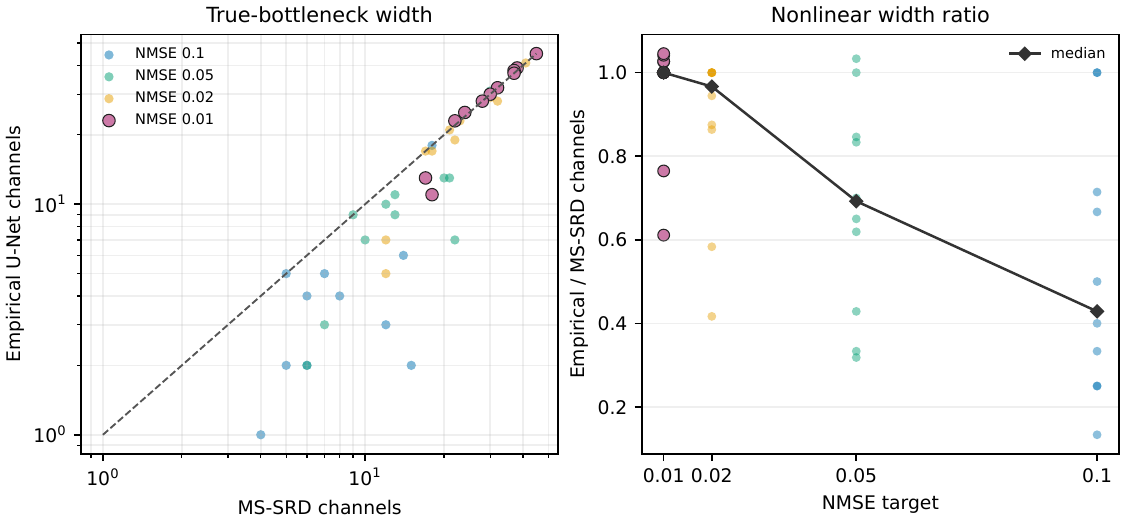}
    \caption{Left: \method\ width versus the protocol-specific test boundary. Right: their ratio by NMSE bound; diamonds connect per-bound medians. The primary $\delta=0.01$ points are emphasized; looser bounds show the change in nonlinear savings.}
    \label{fig:relation}
\end{figure}

For deployment, the architecture first fixes the grid and \method\ supplies $c_{\rm MS}(\delta)$ at that cut. At $\delta=0.01$, every residual underestimation in this study is one channel. A validation search initialized around $c_{\rm MS}$ has mean absolute test-boundary error below 0.7 channels at every evaluated $\delta$, and all 44 selected widths are within three channels of the retrospective boundary. The training-free result therefore narrows the candidate region for the architecture-specific search.

\FloatBarrier
\subsection{A full U-Net bypasses the terminal bottleneck}

Both full-skip controls satisfy $\delta=0.01$ on all eleven datasets, including the models whose terminal latent activation is identically zero. The zero-terminal models have median held-out NMSE 0.000710 (0.0710\%) and worst-case NMSE 0.001089 (0.1089\%) on PneumoniaMNIST, still more than ninefold below the bound. Figure~\ref{fig:unet} compares the zero and one-channel conditions; both remain in the same sub-percent error regime across the eleven datasets.

\begin{table}[b]
\centering
\caption{Full-skip U-Net audit. $M$ is the \method-predicted terminal scalar count at $\delta=0.01$, $S$ is raw skip-path activation count, and zero NMSE is measured with the terminal activation identically zero.}
\label{tab:unet}
\scriptsize
\setlength{\tabcolsep}{4.3pt}
\begin{tabular}{lrrrr@{\hspace{1.5em}}lrrrr}
\toprule
Dataset & $M$ & $S$ & $S/M$ & Zero NMSE & Dataset & $M$ & $S$ & $S/M$ & Zero NMSE\\
\midrule
MNIST & 608 & 21,952 & 36.1 & .0246\% & CIFAR-100 & 480 & 28,672 & 59.7 & .0710\%\\
KMNIST & 592 & 21,952 & 37.1 & .0406\% & ChestMNIST & 272 & 21,952 & 80.7 & .0798\%\\
Fashion-MNIST & 592 & 21,952 & 37.1 & .0457\% & PneumoniaMNIST & 384 & 21,952 & 57.2 & .1089\%\\
CIFAR-10 & 512 & 28,672 & 56.0 & .0654\% & BreastMNIST & 352 & 21,952 & 62.4 & .0866\%\\
OrganAMNIST & 720 & 21,952 & 30.5 & .0465\% & RetinaMNIST & 288 & 21,952 & 76.2 & .0888\%\\
BloodMNIST & 448 & 21,952 & 49.0 & .0808\% & & & & & \\
\bottomrule
\end{tabular}
\end{table}

\begin{figure}[t]
    \centering
    \includegraphics[width=\linewidth]{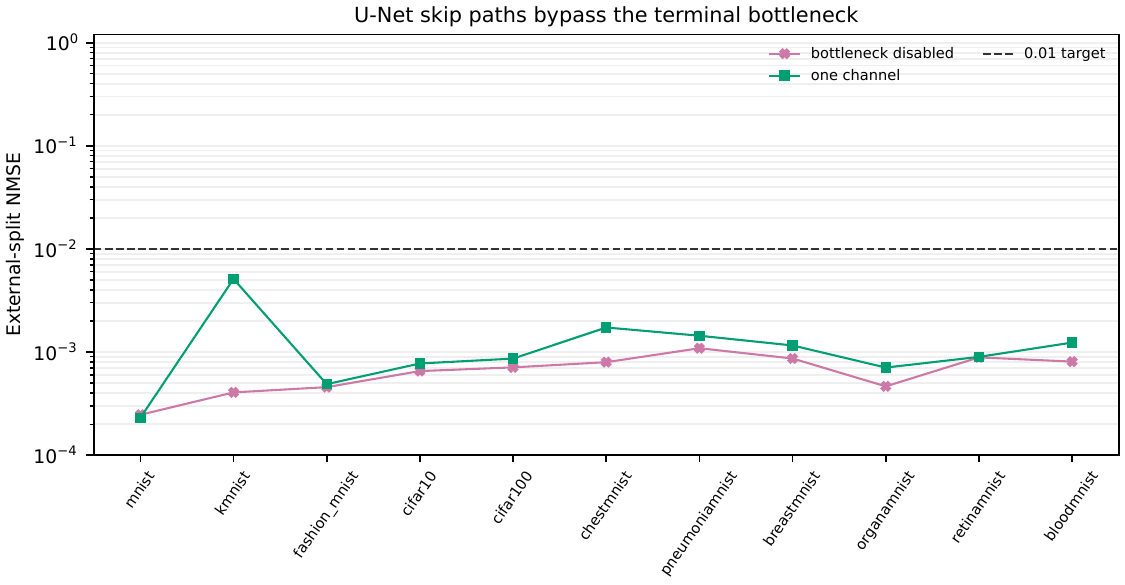}
    \caption{External-split NMSE for full-skip U-Nets with a zeroed terminal path or one terminal channel. Every run lies below $\delta=0.01$.}
    \label{fig:unet}
\end{figure}

The skip audit explains this outcome. The skips transmit between 30.5 and 80.7 times as many raw scalars as the predicted terminal tensor, with a median ratio of 56.0. High-resolution skips provide the decoder with a direct input-dependent route, and the zero-terminal experiment shows that this route alone supports the reported reconstructions.

\clearpage

\section{Discussion}

\subsection{What the result establishes}

For a fixed NMSE bound and block-local model, the covariance eigenvalue tail predicts local reconstruction loss, while spatial repetition converts local rank to tensor size. This gives an exact linear solution and a useful nonlinear initialization.

The U-shaped experiment separates the strict- and loose-bound regimes. At the primary $\delta=0.01$ bound with skips closed, the median empirical-to-predicted channel ratio is one and every underestimation is one channel. The wider-budget experiments show that nonlinear savings grow as the reconstruction constraint is relaxed. In the full-skip control, accurate zero-terminal reconstructions confirm that the skip pathway carries substantial reconstruction capacity.

\method\ uses covariance eigenvalues to count the local directions required by the MSE bound. The associated eigenvectors are needed only for the analytic encoder--decoder map and for initialization of the nonlinear validation model.

\subsection{Limitations}

Three limitations delimit the present result.

\begin{itemize}[label=$\bullet$,leftmargin=1.4em,itemsep=2pt]
    \item \textbf{Model and objective.} The theorem covers shared linear encoders on non-overlapping local blocks under average grayscale pixel MSE. Overlapping or nonlinear operators, color covariance, perceptual and semantic objectives, and worst-case error require corresponding extensions. The reported real-valued scalar count is not a compressed bit rate without a quantizer and entropy model.
    \item \textbf{Architecture dependence.} Candidate spatial cuts are supplied by the architecture: nested-scale dominance makes latent count alone insufficient to choose encoder depth. The nonlinear relation is architecture-specific, and unrestricted U-Net skips bypass the terminal tensor; \method\ does not yet allocate a joint capacity budget across skip pathways and the terminal representation.
    \item \textbf{Statistical and empirical range.} Finite samples perturb eigenvalues near the selection threshold, nonlinear branches can depart from the linear projection, and a shifted test distribution can have a different covariance tail. Bootstrap resampling measures spectral sampling variation but not model-class or train--test effects. The benchmark uses grayscale images up to $64\times64$ with a $96$-pixel geometry stress test, and its external crossings are retrospective comparison oracles rather than unbiased deployment estimates.
\end{itemize}

\pagebreak
\section{Conclusion}

\method\ converts multiscale patch covariance spectra and an NMSE bound into bottleneck channel counts and an activation--parameter Pareto frontier before network training. At $\delta=0.01$, it achieves 0.84\% latent-size MAPE across thirteen nonlinear patch-model experiments; ten predictions are exact and the remaining three differ by one channel. In the skip-closed U-shaped model, the median empirical-to-predicted channel ratio is one, five predictions are exact, and nine are within one channel. Across looser bounds, the experiments quantify how nonlinear savings grow as the reconstruction constraint relaxes. These results establish local covariance spectra as a practical basis for concentrating bottleneck-width search around a small, data-dependent candidate region.

\end{document}